\documentclass{article}
\usepackage{arxiv,times}
\usepackage{geometry} 
\usepackage{caption}

\usepackage{amsmath, amsthm, amssymb, bm}
\newtheorem{Definition}{Definition}
\newtheorem{Theorem}{Theorem}
\newtheorem*{Theorem*}{Theorem}
\newtheorem{Corollary}{Corollary}

\usepackage[colorlinks=true, linkcolor=blue, urlcolor=cyan]{hyperref}

\usepackage{multirow}

\usepackage{booktabs}
\usepackage{longtable}
\usepackage{array}

\usepackage{graphicx}
\usepackage{subfig, caption, wrapfig}
\usepackage{microtype}   
\usepackage[utf8]{inputenc} 
\usepackage[T1]{fontenc}  
\usepackage{url}      
\usepackage{nicefrac}    
\usepackage{scalefnt}
\usepackage{natbib}
\usepackage{xcolor}

\title{Beyond Cosine Similarity}

\author{
Xinbo Ai \\
School of Intelligent Engineering and Automation\\
Beijing University of Posts and Telecommunications\\
Beijing 100876, China\\
\texttt{axb@bupt.edu.cn}
}

\date{} 
\begin{document}

\maketitle

\vspace{3mm}

\begin{abstract}
  Cosine similarity, the standard metric for measuring semantic similarity in vector spaces, is mathematically grounded in the Cauchy-Schwarz inequality, which inherently limits it to capturing linear relationships—a constraint that fails to model the complex, nonlinear structures of real-world semantic spaces. We advance this theoretical underpinning by deriving a tighter upper bound for the dot product than the classical Cauchy-Schwarz bound. This new bound leads directly to \textit{recos}, a similarity metric that normalizes the dot product by the sorted vector components. \textit{recos} relaxes the condition for perfect similarity from strict linear dependence to ordinal concordance, thereby capturing a broader class of relationships. Extensive experiments across 11 embedding models—spanning static, contextualized, and universal types—demonstrate that \textit{recos} consistently outperforms traditional cosine similarity, achieving higher correlation with human judgments on standard Semantic Textual Similarity (STS) benchmarks. Our work establishes \textit{recos} as a mathematically principled and empirically superior alternative, offering enhanced accuracy for semantic analysis in complex embedding spaces.
\end{abstract}

\section{Introduction}

Since its introduction for the vector space model of automatic indexing \cite{saltonVectorSpaceModel1975}, cosine similarity has become the \textit{de facto} standard for measuring text similarity. Its scale invariance and computational efficiency make it ubiquitous in applications ranging from document clustering \cite{saltonVectorSpaceModel1975} and static word embedding comparisons \cite{mikolovEfficientEstimationWord2013,pennington2014glove} to semantic analysis with modern large language models \cite{brownLanguageModelsAre2020}. Its foundational role persists in contemporary applications including retrieval-augmented generation \cite{lewisRetrievalAugmentedGenerationKnowledgeIntensive2020} and cross-modal alignment \cite{radfordLearningTransferableVisual2021}.

The mathematical basis for cosine similarity is the Cauchy-Schwarz inequality, a cornerstone of linear algebra which establishes an upper bound for the dot product of two vectors. Cosine similarity leverages this bound by normalizing the dot product, yielding a measure that depends solely on the angle between vectors and is invariant to their magnitudes. This property is particularly advantageous in high-dimensional embedding spaces.

Despite its dominance, recent work questions the reliability of cosine similarity for capturing semantic relations \cite{steckCosineSimilarityEmbeddingsReally2024}. A key criticism concerns frequency bias: \cite{zhouProblemsCosineMeasure2022} demonstrate that cosine similarity systematically underestimates similarity for high-frequency words in contextualized embeddings, attributing this to the inflated $\ell_2$ norms of such words. Proposed mitigations often involve post-processing or integrating norm-based components \cite{liDistanceWeightedCosine2013, xiaLearningSimilarityCosine2015, wannasuphoprasitSolvingCosineSimilarity2023}, or modifying the feature space \cite{sidorovSoftSimilaritySoft2014}. While effective, these approaches typically \textit{adapt} the application of cosine similarity rather than \textit{reformulating} its underlying mathematical principle. 

The effectiveness of cosine similarity hinges on an angular interpretation of semantic space. This motivates a fundamental question: \textbf{beyond angular alignment, do other relational structures in embeddings---such as consistent ordinal patterns across dimensions---encode meaningful semantic information?} 

To explore this, we revisit the mathematical foundation of similarity measurement. The Cauchy-Schwarz inequality, which underpins cosine similarity, is not the tightest possible bound for the dot product. We derive a strictly tighter upper bound based on the Rearrangement Inequality, which naturally leads to a new similarity measure, \textit{recos}. This measure relaxes the requirement for perfect similarity from strict linear dependence (as in cosine similarity) to ordinal concordance, suggesting a wider capture range for structured, potentially non-linear relationships.

In this work, we: (1) establish a novel hierarchy of inequalities for the dot product, formalizing this new measure and related metrics; (2) theoretically characterize their properties and relationships; (3) conduct a comprehensive evaluation across diverse embedding models and STS benchmarks, showing our measure consistently outperforms cosine similarity. Our findings reveal that monotonic patterns in embeddings provide a signal complementary to angular alignment, constitute a meaningful and complementary signal for analyzing semantic representations.

\section{Methods}

\subsection{Definitions and Notations}

\begin{Definition}[Similar Vectors]\label{def_similar_vectors}
  Let $\mathbf{u} = (u_1, \dots, u_d)$ and $\mathbf{v} = (v_1, \dots, v_d)$ be vectors in $\mathbb{R}^d$. They are said to be \textbf{similar vectors} if they are similarly ordered; that is, if for all $1 \le i,j \le d$, the inequality $\quad (u_i - u_j)(v_i - v_j) \geq 0$ holds. This is equivalent to the non-existence of any discordant pair $\left( {i,j} \right)$, defined by the property that $\left( {{u_i} - {u_j}} \right)$ and $\left( {{v_i} - {v_j}} \right)$ have opposite signs.
\end{Definition}

\begin{Definition}[Discordant Vectors]\label{def_discordant_vectors}
  Let $\mathbf{u} = (u_1, \dots, u_d)$ and $\mathbf{v} = (v_1, \dots, v_d)$ be vectors in $\mathbb{R}^d$. They are said to be \textbf{discordant vectors} if they are \textit{oppositely ordered}; that is, if for all $1 \le i,j \le d$, the inequality $(u_i - u_j)(v_i - v_j) \leq 0$ holds. This is equivalent to the non-existence of any \textit{concordant pair} $(i, j)$, defined by the property that $(u_i - u_j)$ and $(v_i - v_j)$ have the same sign.
\end{Definition}

To ground these definitions, consider an example where three experts score four candidates on a scale from 0 to 10. Each expert’s scores form a vector in $\mathbb{R}^4$:
\begin{itemize}
    \item Expert 1: $\mathbf{e}_1 = (1, 5.5, 2, 4)$
    \item Expert 2: $\mathbf{e}_2 = (2, 6.0, 3, 5)$
    \item Expert 3: $\mathbf{e}_3 = (9, 4.5, 8, 6)$
\end{itemize}

Experts 1 and 2 produce similar evaluations: although their absolute scores differ, they agree on the relative ranking of all candidates (Candidate 2 $\succ$ Candidate 4 $\succ$ Candidate 3 $\succ$ Candidate 1). In contrast, Expert 3's scores are largely reversed relative to Expert 1; for instance, Candidate 1 receives the lowest score from Expert 1 but the highest from Expert 3. This demonstrates a dissimilar assessment.

Our notion of similarity captures the concordance of two vectors, i.e., whether they induce the same ordering of components, which aligns with the intuitive idea of ``agreeing in relative judgment''. This concept possesses three important properties:
\begin{itemize}
  \item Similarity differs from identity. $\mathbf{e}_1$ and $\mathbf{e}_2$ are similar but not the same, thus they can be numerically distinct.
  \item Similarity is not restricted to linear (or proportional) relationships. In real-world evaluations, perfectly proportional score vectors are exceedingly rare.
  \item The essential characteristic of similarity is ordinal concordance rather than metric alignment.
\end{itemize}

Cosine similarity, defined as the cosine of the angle between two vectors, demands strict proportionality for a perfect score. This corresponds to the special case where all component pairs satisfy ${u_i} = k{v_i}$ for some scalar $k$. Such a stringent condition is seldom met in practice; for instance, no expert's ratings are likely to be exactly proportional to $\mathbf{e}_1$. In contrast, our concordance-based definition captures a broader, more realistic form of agreement, which often reflects ``similar opinions'' in comparative assessment tasks. 

\begin{Definition}[Vector Ordering]
Let $\mathbf{u}, \mathbf{v} \in \mathbb{R}^d$ with $\mathbf{u} \cdot \mathbf{v} \neq 0$. We define:
\begin{itemize}
    \item $\mathbf{u}^\uparrow$ denotes $\mathbf{u}$ sorted in non-decreasing order.
    \item $\mathbf{v}^\uparrow$ denotes $\mathbf{v}$ sorted in non-decreasing order.
    \item $\mathbf{v}^\downarrow$ denotes $\mathbf{v}$ sorted in non-increasing order.
    \item $\mathbf{v}^{\updownarrow} = \begin{cases}
        \mathbf{v}^\uparrow, & \text{if } \mathbf{u} \cdot \mathbf{v} > 0, \\[4pt]
        \mathbf{v}^\downarrow, & \text{if } \mathbf{u} \cdot \mathbf{v} < 0.
    \end{cases}$
\end{itemize}
\end{Definition}

For the remainder of this work, we systematically exclude the case where $\mathbf{u} \cdot \mathbf{v} = 0$ from our theoretical analysis and experimental evaluations. This exclusion is justified by two primary considerations. First, in practical high-dimensional embedding spaces, exact orthogonality between non-zero vectors constitutes a measure-zero event, making it a pathological scenario with limited practical relevance to real-world applications. Second, and more fundamentally, when $\mathbf{u} \cdot \mathbf{v} = 0$, all normalized similarity measures—including both standard cosine similarity and the proposed similarity metrics discussed in this work—yield identical zero values, thereby eliminating any discriminative power between different normalization approaches. This convention aligns with established practices in similarity measurement literature \cite{taffeet.ElementaryMathematicalTheory1958}, where zero-correlation boundary cases are typically excluded to maintain theoretical consistency and practical utility.

\subsection{New Inequality Series for Dot Product}

Let $\mathbf{u}, \mathbf{v} \in \mathbb{R}^d$. The Cauchy--Schwarz inequality states that
\begin{equation}
  \left| \mathbf{u} \cdot \mathbf{v} \right| \le \| \mathbf{u} \| \| \mathbf{v} \|,
\end{equation}
where $\mathbf{u} \cdot \mathbf{v} = \sum_{i=1}^d u_i v_i$ denotes the dot product and $\|\mathbf{u}\| = \sqrt{\sum_{i=1}^d u_i^2}$ is the Euclidean norm. Equality holds if and only if $\mathbf{u}$ and $\mathbf{v}$ are linearly dependent.

We sharpen this classical bound and introduce a novel series of inequalities.

\begin{figure*}
  \centering
  \includegraphics[width=0.9\linewidth]{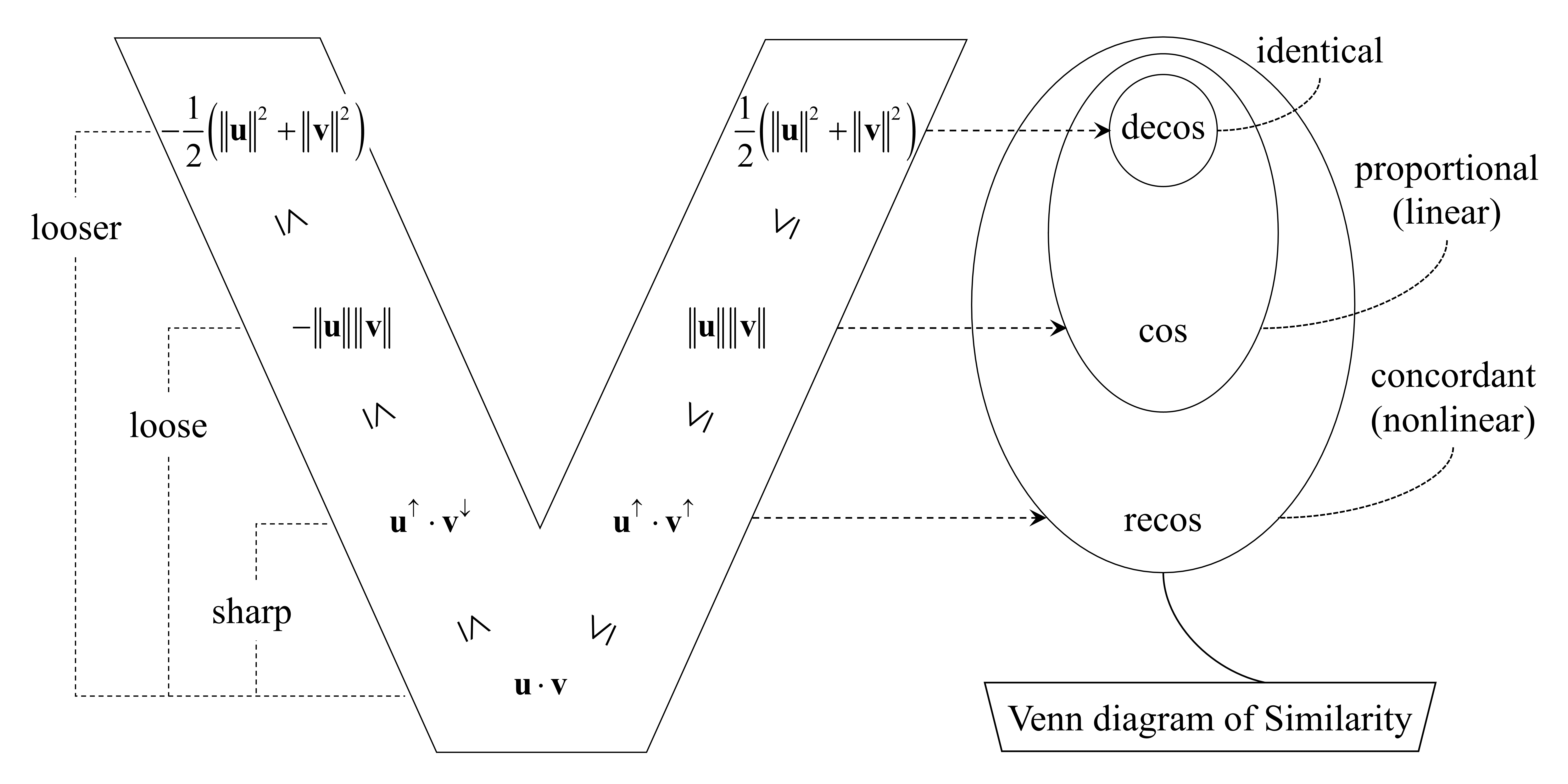}
  \caption{Similarity metrics derived from different bounds and their effective capture ranges. A tighter bound (smaller denominator) leads to a more permissive metric with a wider capture range.}
  \label{fig_Framework}
\end{figure*}

\begin{Theorem}[Chain of Inequalities]\label{thm_inequality_series}
  For vectors $\mathbf{u}$ and $\mathbf{v}$ we have
    \begin{equation*}
    \left| {{\mathbf{u}} \cdot {\mathbf{v}}} \right| \le \left|  {{\mathbf{u}}^ \uparrow \cdot {\mathbf{v}}^\updownarrow}\right| \le \left\| {\mathbf{u}} \right\|\left\| {\mathbf{v}} \right\| \le \frac{1}{2}\left( {{{\left\| {\mathbf{u}} \right\|}^2} + {{\left\| {\mathbf{v}} \right\|}^2}} \right),
  \end{equation*}
  and the equality conditions are as follows:
    \begin{itemize}
    \item $\left| {{\mathbf{u}} \cdot {\mathbf{v}}} \right| = \left|  {{\mathbf{u}}^ \uparrow \cdot {\mathbf{v}}^\updownarrow}\right|$ if and only if $\mathbf{u} \cdot \mathbf{v} > 0$ and $\mathbf{u},\mathbf{v}$ are similar vectors, or $\mathbf{u} \cdot \mathbf{v} < 0$ and they are discordant vectors.  
    \item $\left| {{\mathbf{u}} \cdot {\mathbf{v}}} \right| = \left\| {\mathbf{u}} \right\|\left\| {\mathbf{v}} \right\|$ if and only if  $\mathbf{u}$ and $\mathbf{v}$ are linearly dependent vectors, i.e., there exists a scalar $k$ such that ${\mathbf{v}} = k{\mathbf{u}}$.
    \item $\left| {{\mathbf{u}} \cdot {\mathbf{v}}} \right| = \frac{1}{2}\left( {{{\left\| \mathbf{u} \right\|}^2} + {{\left\| \mathbf{v} \right\|}^2}} \right)$ if and only if $\mathbf{u}$ and $\mathbf{v}$ are identical or opposite vectors, i.e.,$\mathbf{u} =  \pm \mathbf{v}$.
    \item $\left|  {{\mathbf{u}}^ \uparrow \cdot {\mathbf{v}}^\updownarrow}\right| = \left\| {\mathbf{u}} \right\|\left\| {\mathbf{v}} \right\|$ if and only if there exists a scalar $k$ and a permutation matrix $P$ such that $\mathbf{v} = k P \mathbf{u}$, with $\operatorname{sgn} \left( {\mathbf{u} \cdot \mathbf{v}} \right) = \operatorname{sgn} \left( k \right)$.
    \item  $\left|  {{\mathbf{u}}^ \uparrow \cdot {\mathbf{v}}^\updownarrow}\right| = \frac{1}{2}\left( {{{\left\| {\mathbf{u}} \right\|}^2} + {{\left\| {\mathbf{v}} \right\|}^2}} \right)$ if and only if there exists a scalar $k = \pm 1$ and a permutation matrix $P$ such that $\mathbf{v} = k P \mathbf{u}$, with $\operatorname{sgn} \left( {\mathbf{u} \cdot \mathbf{v}} \right) = \operatorname{sgn} \left( k \right)$.
    \item $\left\| {\mathbf{u}} \right\|\left\| {\mathbf{v}} \right\| = \frac{1}{2}\left( {{{\left\| {\mathbf{u}} \right\|}^2} + {{\left\| {\mathbf{v}} \right\|}^2}} \right)$ if and only if $\left\| {\mathbf{u}} \right\| = \left\| {\mathbf{v}} \right\|$.
  \end{itemize}
\end{Theorem}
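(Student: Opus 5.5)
The plan is to prove the three inequalities separately and then read off the six equality conditions by combining their individual equality cases.

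\textbf{First inequality.} I will use the Rearrangement Inequality. For any permutation $\sigma$,
\[
\mathbf{u}^\uparrow\cdot\mathbf{v}^\downarrow \le \sum_i u_{\sigma(i)} v_i \le \mathbf{u}^\uparrow\cdot\mathbf{v}^\uparrow .
\]
In particular,
\[
\mathbf{u}^\uparrow\cdot\mathbf{v}^\downarrow \le \mathbf{u}\cdot\mathbf{v} \le \mathbf{u}^\uparrow\cdot\mathbf{v}^\uparrow .
\]
If $\mathbf{u}\cdot\mathbf{v}>0$, then $0<\mathbf{u}\cdot\mathbf{v}\le \mathbf{u}^\uparrow\cdot\mathbf{v}^\uparrow$, which gives $|\mathbf{u}\cdot\mathbf{v}|\le|\mathbf{u}^\uparrow\cdot\mathbf{v}^\updownarrow|$. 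If $\mathbf{u}\cdot\mathbf{v}<0$, the lower rearrangement bound is used symmetrically. For the equality case, note that $\mathbf{u}^\uparrow\cdot\mathbf{v}^\uparrow-\mathbf{u}\cdot\mathbf{v}$ is a sum of nonnegative terms. Concretely, I will use the identity
\[
\sum_{i,j}(u_i-u_j)(v_i-v_j) = 2d\,\mathbf{u}\cdot\mathbf{v} - 2\Big(\sum_i u_i\Big)\Big(\sum_i v_i\Big).
\]
Alternatively, and more cleanly, I will argue by exchange: if some pair $(i,j)$ is discordant, swapping $v_i$ and $v_j$ strictly increases the dot product by $-(u_i-u_j)(v_i-v_j)>0$. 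So equality with the maximum forces there to be no discordant pair. Conversely, similarly ordered vectors can be simultaneously sorted, handling ties by a common ordering, so $\mathbf{u}\cdot\mathbf{v}=\mathbf{u}^\uparrow\cdot\mathbf{v}^\uparrow$. The case $\mathbf{u}\cdot\mathbf{v}<0$ with discordant vectors is identical.

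\textbf{Second and third inequalities.} Since $\mathbf{u}^\uparrow$ and $\mathbf{v}^\updownarrow$ are permutations of $\mathbf{u}$ and $\mathbf{v}$, they have the same norms. Cauchy--Schwarz applied to them gives $|\mathbf{u}^\uparrow\cdot\mathbf{v}^\updownarrow|\le\|\mathbf{u}\|\|\mathbf{v}\|$. Equality holds iff $\mathbf{v}^\updownarrow=k\mathbf{u}^\uparrow$. Undoing the sorting permutations, this means $\mathbf{v}=kP\mathbf{u}$ for a permutation matrix $P$. I also need to check that $\operatorname{sgn}(k)=\operatorname{sgn}(\mathbf{u}\cdot\mathbf{v})$ is consistent. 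When $\mathbf{u}\cdot\mathbf{v}>0$ we use $\mathbf{v}^\uparrow$, and $k>0$ is needed for $k\mathbf{u}^\uparrow$ to be non-decreasing; the exception is the constant-vector case, where either sign works and I will note it. When $\mathbf{u}\cdot\mathbf{v}<0$ we use $\mathbf{v}^\downarrow$ and need $k<0$. Conversely, if $\mathbf{v}=kP\mathbf{u}$ with the stated sign, then $\mathbf{v}^\updownarrow=k\mathbf{u}^\uparrow$ directly. The third inequality is AM--GM, $(\|\mathbf{u}\|-\|\mathbf{v}\|)^2\ge 0$, with equality iff $\|\mathbf{u}\|=\|\mathbf{v}\|$.

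\textbf{Composite equality conditions.} Equality between two non-adjacent quantities in the chain holds iff every intermediate inequality is an equality. So $|\mathbf{u}\cdot\mathbf{v}|=\|\mathbf{u}\|\|\mathbf{v}\|$ is just Cauchy--Schwarz, which I quote directly. Next, $|\mathbf{u}\cdot\mathbf{v}|=\tfrac12(\|\mathbf{u}\|^2+\|\mathbf{v}\|^2)$ requires linear dependence together with equal norms, hence $\mathbf{v}=\pm\mathbf{u}$. Finally, $|\mathbf{u}^\uparrow\cdot\mathbf{v}^\updownarrow|=\tfrac12(\cdots)$ requires $\mathbf{v}=kP\mathbf{u}$ with $\|\mathbf{v}\|=\|\mathbf{u}\|$. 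Since $\mathbf{u}\neq 0$ (because $\mathbf{u}\cdot\mathbf{v}\neq0$), this forces $|k|=1$.

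\textbf{Main obstacle.} I expect the delicate step to be the strictness and tie handling in the rearrangement equality case, including the claim that sign-consistency of $k$ is automatic. I will treat ties by choosing a sorting permutation compatible with both vectors. I will also use the standing assumption $\mathbf{u}\cdot\mathbf{v}\ne0$, which keeps $\mathbf{v}^\updownarrow$ well defined and guarantees nonzero vectors.
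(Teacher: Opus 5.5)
Your proposal is correct and follows the paper's proof essentially step for step: the rearrangement inequality with its equality case for the first bound, Cauchy--Schwarz applied to $\mathbf{u}^\uparrow$ and $\mathbf{v}^\updownarrow$ for the second, AM--GM for the third, and chaining of the individual equality cases for the composite conditions; your swap argument simply proves the rearrangement equality case that the paper quotes. One small correction: the constant-vector case is not an ``either sign'' exception, because $k\|\mathbf{u}\|^2=\mathbf{u}^\uparrow\cdot\mathbf{v}^\updownarrow$ always has the sign of $\mathbf{u}\cdot\mathbf{v}$ (since $\mathbf{u}^\uparrow\cdot\mathbf{v}^\uparrow\ge\mathbf{u}\cdot\mathbf{v}>0$, or $\mathbf{u}^\uparrow\cdot\mathbf{v}^\downarrow\le\mathbf{u}\cdot\mathbf{v}<0$), and this fixes $\operatorname{sgn}(k)$ in every case, exactly as the paper argues.
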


Proofs of the above theorem and subsequent corollaries are provided in the appendix.

Theorem~\ref{thm_inequality_series} establishes a hierarchy of upper bounds for the dot product. The new term $\left| \mathbf{u}^{\uparrow} \cdot \mathbf{v}^{\updownarrow} \right|$ is the \emph{tightest} bound, followed by the Cauchy--Schwarz bound $\| \mathbf{u} \| \| \mathbf{v} \|$, and finally the bound $\frac{1}{2}(\| \mathbf{u} \|^2 + \| \mathbf{v} \|^2)$. This hierarchy naturally suggests alternative normalization schemes for the dot product, moving beyond the conventional cosine similarity.

Each bound induces a distinct similarity measure with a different \emph{saturation condition} for achieving a maximum score of 1. The tightest bound leads to a measure that saturates under the broadest condition (ordinal concordance), while the loosest bound requires the strictest condition (vector identity/anti-identity). Cosine similarity, based on the intermediate Cauchy--Schwarz bound, saturates under linear dependence.

\subsection{Similarity Metrics Derived from Different Bounds}
\label{sec:metrics}

The inequality series in Theorem~\ref{thm_inequality_series} provides three distinct bounds, each leading to a different normalization of the dot product. We define the corresponding similarity metrics: $\mathrm{recos}$, $\cos$, and $\mathrm{decos}$, as illustrated in Figure~\ref{fig_Framework}.

\begin{Definition}[$\mathrm{recos}$] \label{def_recos}
  The Rearrangement-inequality-based Cosine Similarity ($\mathrm{recos}$) is defined as:
  \begin{equation}
    \mathrm{recos}(\mathbf{u}, \mathbf{v}) = \frac{\mathbf{u} \cdot \mathbf{v}}{\left| \mathbf{u}^{\uparrow} \cdot \mathbf{v}^{\updownarrow} \right|}.
  \end{equation}
\end{Definition}

\begin{Definition}[$\cos$] \label{def_cos}
  The Cosine Similarity is defined as \cite{manningIntroductionInformationRetrieval2008}:
  \begin{equation}
    \cos(\mathbf{u}, \mathbf{v}) = \frac{\mathbf{u} \cdot \mathbf{v}}{\| \mathbf{u} \| \| \mathbf{v} \|}.
  \end{equation}
\end{Definition}

\begin{Definition}[$\mathrm{decos}$] \label{def_decos}
  The Degenerated Cosine Similarity ($\mathrm{decos}$), based on the inequality of arithmetic and quadratic means, is defined as:
  \begin{equation}
    \mathrm{decos}(\mathbf{u}, \mathbf{v}) = \frac{\mathbf{u} \cdot \mathbf{v}}{\frac{1}{2} \left( \| \mathbf{u} \|^2 + \| \mathbf{v} \|^2 \right)}.
  \end{equation}
  We term it ``degenerated'' because it primarily measures near-identity relationships rather than general similarity.
\end{Definition}

The $\mathrm{decos}$ metric is closely related to the classical Tanimoto coefficient \cite{taffeet.ElementaryMathematicalTheory1958}.
\begin{Definition}[Tanimoto Similarity] \label{def_tan}
  The Tanimoto similarity is defined as:
  \begin{equation}
    \mathrm{tan}(\mathbf{u}, \mathbf{v}) = \frac{\mathbf{u} \cdot \mathbf{v}}{\| \mathbf{u} \|^2 + \| \mathbf{v} \|^2 - \mathbf{u} \cdot \mathbf{v}}.
  \end{equation}
\end{Definition}
Assuming non-negative correlation ($\mathbf{u} \cdot \mathbf{v} > 0$), the inequality $\mathbf{u} \cdot \mathbf{v} \le \frac{1}{2}(\|\mathbf{u}\|^2 + \|\mathbf{v}\|^2)$ implies $\mathbf{u} \cdot \mathbf{v} \le \|\mathbf{u}\|^2 + \|\mathbf{v}\|^2 - \mathbf{u} \cdot \mathbf{v}$. Thus, $\mathrm{tan}$ shares the same theoretical bounds as $\mathrm{decos}$. Moreover, for $\mathrm{tan}(\mathbf{u}, \mathbf{v}) \in [0,1]$, the two are related by a strictly monotonic bijection:
\begin{equation}
  \mathrm{decos}(\mathbf{u}, \mathbf{v}) = \frac{2 \cdot \mathrm{tan}(\mathbf{u}, \mathbf{v})}{1 + \mathrm{tan}(\mathbf{u}, \mathbf{v})}.
\end{equation}
Since this transformation preserves rank order, our subsequent analysis focuses on $\mathrm{decos}$ without loss of generality.

The three metrics exhibit distinct theoretical properties, as formalized below.
\begin{Corollary}[Bounds and Saturation Conditions]\label{re_de_cos_properties}
  For vectors $\mathbf{u}$, $\mathbf{v}$, we have 
  \begin{itemize}
    \item $\left| {\mathrm{recos}\left( {{\mathbf{u}},{\mathbf{v}}} \right)} \right| \le 1$ and the equality holds if and only if $\mathbf{u} \cdot \mathbf{v} > 0$ and $\mathbf{u},\mathbf{v}$ are similar vectors, or $\mathbf{u} \cdot \mathbf{v} < 0$ and they are discordant vectors.
    \item $\left| {\cos\left( {{\mathbf{u}},{\mathbf{v}}} \right)} \right| \le 1$ and the equality holds if and only if  $\mathbf{u}$ and $\mathbf{v}$ are linearly dependent vectors, i.e., there exists a scalar $k$ such that ${\mathbf{v}} = k{\mathbf{u}}$.
    \item $\left| {\mathrm{decos}\left( {{\mathbf{u}},{\mathbf{v}}} \right)} \right| \le 1$ and the equality holds if and only if $\mathbf{u}$ and $\mathbf{v}$ are identical or opposite vectors, i.e.,$\mathbf{u} =  \pm \mathbf{v}$.
  \end{itemize}
\end{Corollary}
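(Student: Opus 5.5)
The corollary follows from Theorem~\ref{thm_inequality_series} by dividing through by the relevant positive denominator. Throughout, we use the standing convention $\mathbf{u}\cdot\mathbf{v}\neq 0$. This guarantees that each denominator is strictly positive.

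For cosine and decos this is immediate. Since $|\mathbf{u}\cdot\mathbf{v}|>0$, the chain of inequalities gives $\|\mathbf{u}\|\|\mathbf{v}\| \ge |\mathbf{u}\cdot\mathbf{v}|>0$ and $\tfrac12(\|\mathbf{u}\|^2+\|\mathbf{v}\|^2) \ge |\mathbf{u}\cdot\mathbf{v}| >0$. For recos, the first link gives $|\mathbf{u}^\uparrow\cdot\mathbf{v}^\updownarrow| \ge |\mathbf{u}\cdot\mathbf{v}|>0$. In each case let $D$ denote the denominator, so that $D>0$.

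Next, for each metric $m\in\{\mathrm{recos},\cos,\mathrm{decos}\}$ we have $|m(\mathbf{u},\mathbf{v})| = |\mathbf{u}\cdot\mathbf{v}|/D$, where $D$ is the corresponding bound in the chain. The inequality $|m|\le 1$ is therefore equivalent to $|\mathbf{u}\cdot\mathbf{v}|\le D$. This is exactly one of the inequalities obtained by comparing $|\mathbf{u}\cdot\mathbf{v}|$ with a later term of the chain:
\begin{itemize}
\item the first link for recos;
\item the first two links combined for $\cos$;
\item the whole chain for decos.
\end{itemize}

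For the equality cases, $|m|=1$ holds if and only if $|\mathbf{u}\cdot\mathbf{v}|=D$. We then quote the matching equality condition from Theorem~\ref{thm_inequality_series}: the first bullet for recos, the second for $\cos$, and the third for decos. These transfer verbatim.

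The only point needing care is the recos statement. Its equality condition already carries the sign information: $\mathbf{u}\cdot\mathbf{v}>0$ with similar vectors, or $\mathbf{u}\cdot\mathbf{v}<0$ with discordant vectors. This must match the definition of $\mathbf{v}^\updownarrow$, which uses $\mathbf{v}^\uparrow$ when the dot product is positive and $\mathbf{v}^\downarrow$ when it is negative. It does, so no extra work is needed beyond the theorem.

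A secondary remark is that $|m|=1$ means $m=\operatorname{sgn}(\mathbf{u}\cdot\mathbf{v})$. This is consistent with the stated conditions: for example, $\mathbf{u}=-\mathbf{v}$ gives $\mathrm{decos}=-1$.

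The main obstacle is therefore only bookkeeping: verifying strict positivity of the denominators so that the division is legitimate. The substantive content rests entirely on Theorem~\ref{thm_inequality_series}, which we assume.
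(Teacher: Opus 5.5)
Your proposal is correct and follows the paper's own argument: the paper simply says the corollary follows directly from Theorem~\ref{thm_inequality_series} and the definitions, which is exactly your step of dividing $|\mathbf{u}\cdot\mathbf{v}|$ by each bound and reading off the first three equality conditions. Your explicit check that each denominator is strictly positive under the standing convention $\mathbf{u}\cdot\mathbf{v}\neq 0$ is a detail the paper leaves implicit, and you handle it correctly.
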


Corollary~\ref{re_de_cos_properties} reveals the hierarchy of \emph{strictness} among the saturation conditions. Achieving a similarity score of $1$ is most restrictive for $\mathrm{decos}$ (requiring vector identity/anti-identity), less restrictive for $\cos$ (requiring linear dependence), and least restrictive for $\mathrm{recos}$ (requiring only a monotonic relationship). Consequently, $\mathrm{recos}$ possesses the \emph{widest effective capture range}, enabling it to saturate---and thus accurately reflect high similarity---for a broader class of vector relationships, including non-linear monotonic associations common in semantic spaces.

\begin{Corollary}[Metric Hierarchy]\label{metric_hierarchy}
  For vectors $\mathbf{u}$, $\mathbf{v}$, we have 
    \begin{equation}
    \left| {\mathrm{decos}\left( {{\mathbf{u}},{\mathbf{v}}} \right)} \right| \le \left| {\cos\left( {{\mathbf{u}},{\mathbf{v}}} \right)} \right| \le \left| {\mathrm{recos}\left( {{\mathbf{u}},{\mathbf{v}}} \right)} \right|,
  \end{equation}
  and the equality conditions are as follows:
  \begin{itemize}
    \item $\left| {\mathrm{recos}\left( {{\mathbf{u}},{\mathbf{v}}} \right)} \right| = \left| {\cos\left( {{\mathbf{u}},{\mathbf{v}}} \right)} \right|$ if and only if there exists a scalar $k$ and a permutation matrix $P$ such that $\mathbf{v} = k P \mathbf{u}$, with $\operatorname{sgn} \left( {\mathbf{u} \cdot \mathbf{v}} \right) = \operatorname{sgn} \left( k \right)$.
    \item $\left| {\mathrm{recos}\left( {{\mathbf{u}},{\mathbf{v}}} \right)} \right| = \left| {\mathrm{decos}\left( {{\mathbf{u}},{\mathbf{v}}} \right)} \right|$ if and only if there exists a scalar $k = \pm 1$ and a permutation matrix $P$ such that $\mathbf{v} = k P \mathbf{u}$, with $\operatorname{sgn} \left( {\mathbf{u} \cdot \mathbf{v}} \right) = \operatorname{sgn} \left( k \right)$.
    \item $\left| {\cos\left( {{\mathbf{u}},{\mathbf{v}}} \right)} \right| = \left| {\mathrm{decos}\left( {{\mathbf{u}},{\mathbf{v}}} \right)} \right|$ if and only if $\left\| {\mathbf{u}} \right\| = \left\| {\mathbf{v}} \right\|$.
  \end{itemize}
\end{Corollary}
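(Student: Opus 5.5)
The plan is to derive Corollary~\ref{metric_hierarchy} directly from Theorem~\ref{thm_inequality_series}, exploiting that all three metrics share the numerator $\mathbf{u}\cdot\mathbf{v}$ and differ only in their denominators. Since we assume $\mathbf{u}\cdot\mathbf{v}\neq 0$, the vectors are nonzero and every denominator is strictly positive. In particular $\|\mathbf{u}\|\|\mathbf{v}\|>0$, and $|\mathbf{u}^{\uparrow}\cdot\mathbf{v}^{\updownarrow}|\ge|\mathbf{u}\cdot\mathbf{v}|>0$ by the first inequality of the theorem. Taking absolute values gives
\begin{equation*}
|\mathrm{recos}|=\frac{|\mathbf{u}\cdot\mathbf{v}|}{|\mathbf{u}^{\uparrow}\cdot\mathbf{v}^{\updownarrow}|},\qquad |\cos|=\frac{|\mathbf{u}\cdot\mathbf{v}|}{\|\mathbf{u}\|\|\mathbf{v}\|},\qquad |\mathrm{decos}|=\frac{|\mathbf{u}\cdot\mathbf{v}|}{\frac12(\|\mathbf{u}\|^2+\|\mathbf{v}\|^2)}.
\end{equation*}
The chain of Theorem~\ref{thm_inequality_series} gives
\begin{equation*}
|\mathbf{u}^{\uparrow}\cdot\mathbf{v}^{\updownarrow}|\le\|\mathbf{u}\|\|\mathbf{v}\|\le\tfrac12(\|\mathbf{u}\|^2+\|\mathbf{v}\|^2).
\end{equation*}
Dividing the positive quantity $|\mathbf{u}\cdot\mathbf{v}|$ by these three denominators reverses the order, which yields $|\mathrm{decos}|\le|\cos|\le|\mathrm{recos}|$.

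For the equality conditions, the key observation is that, because the common numerator is nonzero, equality of two metrics in absolute value holds exactly when the corresponding denominators are equal. Thus $|\mathrm{recos}|=|\cos|$ is equivalent to $|\mathbf{u}^{\uparrow}\cdot\mathbf{v}^{\updownarrow}|=\|\mathbf{u}\|\|\mathbf{v}\|$. Likewise $|\mathrm{recos}|=|\mathrm{decos}|$ is equivalent to $|\mathbf{u}^{\uparrow}\cdot\mathbf{v}^{\updownarrow}|=\frac12(\|\mathbf{u}\|^2+\|\mathbf{v}\|^2)$, and $|\cos|=|\mathrm{decos}|$ is equivalent to $\|\mathbf{u}\|\|\mathbf{v}\|=\frac12(\|\mathbf{u}\|^2+\|\mathbf{v}\|^2)$. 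These are precisely the fourth, fifth and sixth equality clauses of Theorem~\ref{thm_inequality_series}, so the stated characterizations (permutation with scalar $k$ of the correct sign, the same with $k=\pm1$, and $\|\mathbf{u}\|=\|\mathbf{v}\|$) transfer verbatim.

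There is no real obstacle, since all the substance lives in Theorem~\ref{thm_inequality_series}, which we may assume. The one point needing care is justifying that the recos denominator cannot vanish. This is settled by the first inequality of the chain together with the standing exclusion $\mathbf{u}\cdot\mathbf{v}\neq0$. Without that exclusion, the equivalence ``equal metrics $\iff$ equal denominators'' would fail, because both sides would be $0$ regardless of the denominators.
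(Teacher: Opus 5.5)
Your proposal is correct and follows the same route as the paper, which just says the corollary follows directly from Theorem~\ref{thm_inequality_series} and the definitions of $\mathrm{decos}$, $\cos$ and $\mathrm{recos}$. You spell out what the paper leaves implicit: the shared nonzero numerator, strict positivity of all three denominators (including $|\mathbf{u}^{\uparrow}\cdot\mathbf{v}^{\updownarrow}|\ge|\mathbf{u}\cdot\mathbf{v}|>0$), and the reduction of each equality case to the fourth, fifth and sixth clauses of the theorem.
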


The first equality condition shows that $\mathrm{recos}$ reduces to $\mathrm{cos}$ when one vector is a permuted and scaled version of the other—a condition strictly more general than linear dependence alone. This theoretical hierarchy ($\mathrm{decos} \subset \cos \subset \mathrm{recos}$ in terms of capture range) indicates that $\mathrm{recos}$ captures the broadest class of similarity relationships, particularly those beyond simple linear correlation. 

A common practice in similarity computation is to normalize input vectors to unit length. Under this precondition, the relationship between $\mathrm{decos}$ and $\cos$ simplifies, as formalized below.

\begin{Corollary}[Norm Identity]\label{cor_normalized_vectors}
    For unit-norm vectors where $\| \mathbf{u} \| = \| \mathbf{v} \| = 1$, the following identity holds:
    \begin{equation}
        \mathrm{decos}(\mathbf{u}, \mathbf{v}) = \cos(\mathbf{u}, \mathbf{v}).
    \end{equation}
\end{Corollary}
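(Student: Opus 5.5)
The plan is to substitute the unit-norm hypothesis directly into the two definitions and check that the denominators coincide. By Definition~\ref{def_decos}, the denominator of $\mathrm{decos}$ is $\frac{1}{2}\left(\|\mathbf{u}\|^2+\|\mathbf{v}\|^2\right)$. With $\|\mathbf{u}\|=\|\mathbf{v}\|=1$ this equals $\frac{1}{2}(1+1)=1$. By Definition~\ref{def_cos}, the denominator of $\cos$ is $\|\mathbf{u}\|\|\mathbf{v}\|=1\cdot 1=1$.

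Both metrics have the same numerator $\mathbf{u}\cdot\mathbf{v}$, so both reduce to $\mathbf{u}\cdot\mathbf{v}$, and the identity $\mathrm{decos}(\mathbf{u},\mathbf{v})=\cos(\mathbf{u},\mathbf{v})$ follows.

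There is also a second, structural route via Theorem~\ref{thm_inequality_series}. Its last equality condition says that $\|\mathbf{u}\|\|\mathbf{v}\|=\frac{1}{2}\left(\|\mathbf{u}\|^2+\|\mathbf{v}\|^2\right)$ if and only if $\|\mathbf{u}\|=\|\mathbf{v}\|$, which holds in the unit-norm case. Hence the two denominators agree, and since the numerators are identical, the metrics agree; this is consistent with the third equality condition of Corollary~\ref{metric_hierarchy}. That route actually shows equality for any pair of equal-norm vectors, not only unit ones. I will nevertheless present the direct computation as the main argument, since it is self-contained.

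No step is a genuine obstacle; the argument is a one-line substitution. The only care point is that both metrics need a nonzero denominator, which holds automatically because both denominators equal $1$. The paper's standing convention $\mathbf{u}\cdot\mathbf{v}\neq 0$ is not needed for this identity. If $\mathbf{u}\cdot\mathbf{v}=0$, both sides are $0$.
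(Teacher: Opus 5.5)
Your direct computation is correct and is essentially the paper's own proof: substituting $\|\mathbf{u}\|=\|\mathbf{v}\|=1$ makes both denominators equal to $1$, so both metrics reduce to $\mathbf{u}\cdot\mathbf{v}$. Your side remark is also valid but not needed: the AM--GM equality condition in Theorem~\ref{thm_inequality_series} extends the identity to any pair of nonzero vectors of equal norm.
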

For unit-norm vectors, the denominators of $\mathrm{cos}$ and $\mathrm{decos}$ coincide, leading to the identity stated in Corollary~\ref{cor_normalized_vectors}. It establishes that $\mathrm{decos}$ and $\cos$ are \emph{identical} for unit-norm vectors. Consequently, in standard pipelines where cosine similarity is applied to normalized embeddings, replacing $\cos$ with $\mathrm{decos}$ yields no difference.

This equivalence, however, underscores a key limitation of both metrics: they reduce to the same measure when magnitude information is discarded. In contrast, $\mathrm{recos}$ remains distinct even for normalized vectors because its denominator, $\left| \mathbf{u}^{\uparrow} \cdot \mathbf{v}^{\updownarrow} \right|$, depends on the ordinal structure of vector components, not merely their norms. This property allows $\mathrm{recos}$ to capture similarity based on monotonic relationships, a feature preserved under normalization. Thus, $\mathrm{recos}$ provides a strictly more general similarity measure that does not collapse to standard cosine similarity under the common unit-norm constraint.

\subsection{An Illustrative Example}

The normalization bound applied to the dot product of two vectors $\mathbf{u}$ and $\mathbf{v}$ fundamentally determines the sensitivity and scope—or \textit{capture range}—of the resulting similarity measure. An inappropriate bound can lead to systematic \textit{underestimation} of true similarity, particularly when the vector relationship deviates from the measure's assumed structure. To illustrate this, we extend the previous expert evaluation example with three additional raters (Experts 4, 5, and 6) to demonstrate how different similarity measures respond to distinct relationship types:
\begin{itemize}
    \item Expert 1: $\mathbf{e}_1 = (1, 5.5, 2, 4)$
    \item Expert 4: $\mathbf{e}_4 = (2, 5.5, 1, 4)$ (genuine dissimilarity)
    \item Expert 5: $\mathbf{e}_5 = 1.225 \times \mathbf{e}_1$ (linear similarity)
    \item Expert 6: $\mathbf{e}_6 = (1, 8.5, 2, 4)$ (nonlinear, order-preserving similarity)
\end{itemize}
Here, $\mathbf{e}_4$ and $\mathbf{e}_1$ reflect genuine disagreement, as they hold opposing preferences between the third and fourth candidates, while $\mathbf{e}_5$ and $\mathbf{e}_6$ represent two distinct types of similarity relationships with $\mathbf{e}_1$.

The calculated similarities reveal the limitations of each metric:
\begin{itemize}
    \item $\mathrm{decos}(\mathbf{e}_1, \mathbf{e}_5) \approx \mathrm{decos}(\mathbf{e}_1, \mathbf{e}_4) \approx 0.98$. The $\mathrm{decos}$ metric fails to distinguish between a linearly related vector ($\mathbf{e}_5$) and a genuinely dissimilar one ($\mathbf{e}_4$). This illustrates its narrow capture range: it inherently interprets any deviation from perfect identity as dissimilarity, even when a strong linear relationship exists.
    \item $\cos(\mathbf{e}_1, \mathbf{e}_5) = 1.00 > \cos(\mathbf{e}_1, \mathbf{e}_4) \approx 0.98$. Unlike $\mathrm{decos}$, $\cos$ correctly identifies the perfect linear relationship in $\mathbf{e}_5$, demonstrating its suitability for capturing linear associations.
    \item However, $\cos(\mathbf{e}_1, \mathbf{e}_6) \approx \cos(\mathbf{e}_1, \mathbf{e}_4) \approx 0.98$. Here, $\cos$ cannot differentiate between the genuinely dissimilar pair $(\mathbf{e}_1, \mathbf{e}_4)$ and the nonlinear but order-preserving pair $(\mathbf{e}_1, \mathbf{e}_6)$. It assigns them equal similarity scores, unable to discern whether a low score indicates a true lack of correlation or merely a consistent, nonlinear association.
    \item $\mathrm{recos}(\mathbf{e}_1, \mathbf{e}_6) = 1.00 > \mathrm{recos}(\mathbf{e}_1, \mathbf{e}_4) \approx 0.98$. In contrast, $\mathrm{recos}$ recognizes the perfect ordinal agreement in $(\mathbf{e}_1, \mathbf{e}_6)$ and assigns a maximal score, while correctly assigning a lower score to the genuinely dissimilar pair $(\mathbf{e}_1, \mathbf{e}_4)$. This shows that $\mathrm{recos}$ effectively captures nonlinear, concordant relationships without conflating them with true dissimilarity.
\end{itemize}

This example highlights the core issue of underestimation through an intuitive analogy: $\mathrm{decos}$ is akin to a tool that can only verify perfect identity; when assessing general similarity, it cannot determine whether a low score stems from genuine dissimilarity or from a strongly related but non-identical structure. Likewise, $\mathrm{cos}$ is calibrated for linear relationships; when presented with a broader, concordant but nonlinear association, it yields an underestimated score, unable to discern whether the deviation indicates true dissimilarity or merely a nonlinear yet consistent relationship. The $\mathrm{recos}$ metric, by employing a tighter normalization bound derived from the Rearrangement Inequality, avoids this pitfall. Its wider capture range enables it to accurately reflect strong, order-preserving relationships without underestimation.

Different metrics reflect different inductive biases. $\mathrm{decos}$ is highly sensitive to magnitude differences, $\cos$ captures angular alignment (linear correlation), while $\mathrm{recos}$ responds to ordinal concordance. The example is illustrative and uses low-dimensional vectors; the practical relevance of ordinal concordance in high-dimensional embedding spaces is an empirical question we investigate next.

\subsection{Computational Complexity Analysis}

The computation of $\mathrm{recos}$ entails a sorting step to obtain $\mathbf{u}^{\uparrow}$ and $\mathbf{v}^{\updownarrow}$, incurring a time complexity of $O(d \log d)$ compared to the $O(d)$ of standard cosine similarity. For typical embedding dimensions ($d = 128$--$1024$), the practical overhead of sorting is often negligible for single or batch comparisons due to highly optimized sorting algorithms and hardware. However, in large-scale retrieval scenarios involving billions of vectors, this logarithmic factor increase can become significant.

\begin{figure*}[htb]
\begin{center}
  \includegraphics[width=0.9\linewidth]{{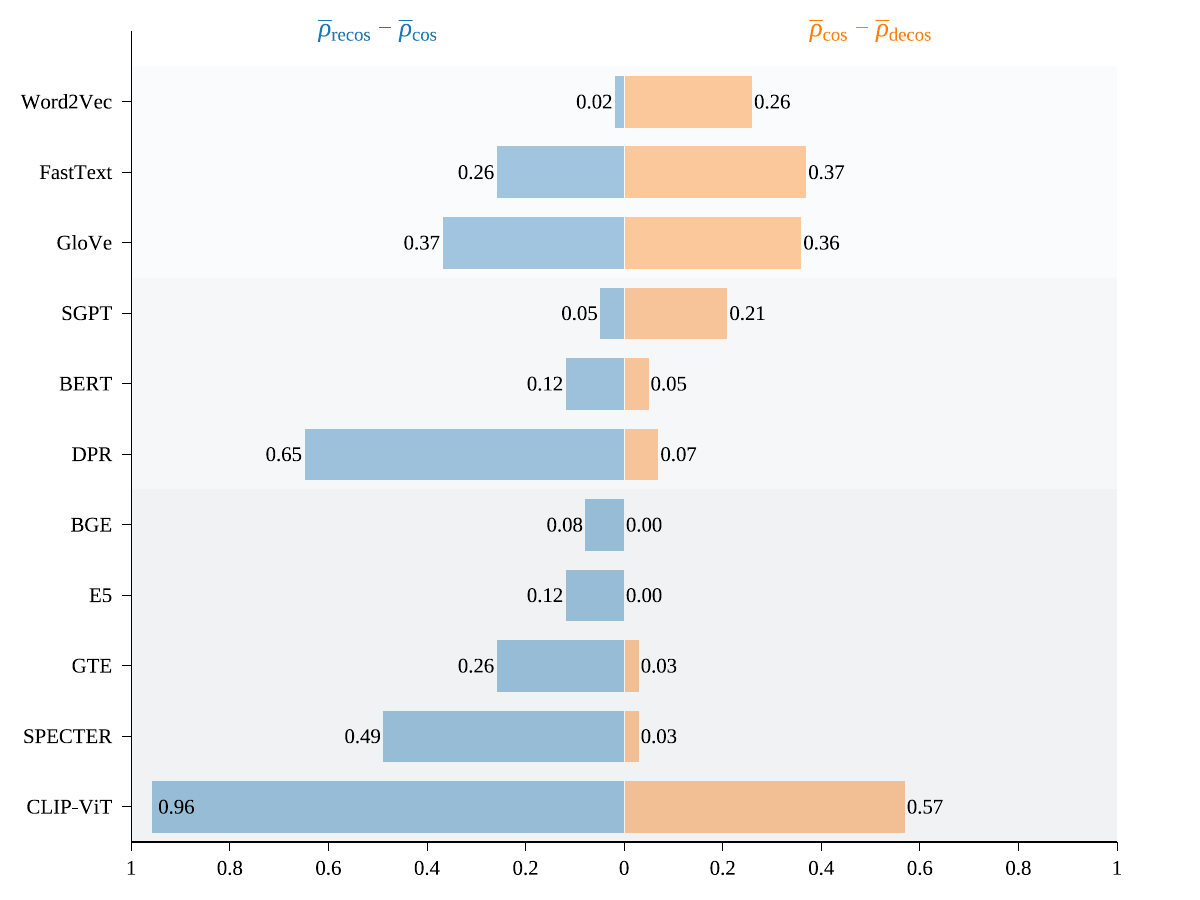}}
  \caption{Performance gains of $\mathrm{recos}$ over $\mathrm{cos}$ and $\mathrm{cos}$ over $\mathrm{decos}$}
  \label{fig_performance_improvement}
\end{center}
\end{figure*}

\section{Experimental Evaluation}

\subsection{Datasets and Metrics}

We evaluate the proposed similarity metric $\mathrm{recos}$ on standard Semantic Textual Similarity (STS) benchmarks, which provide the most direct and appropriate testbed for assessing sentence-level semantic comparisons. The core objective of STS—to assign a graded score reflecting the degree of semantic equivalence between two sentences—aligns precisely with the fundamental purpose of a similarity metric. Therefore, performance on these tasks offers the most straightforward evidence of a metric's ability to approximate human semantic judgment.

Our evaluation encompasses seven widely-used datasets: STS12, STS13, STS14, STS15, STS16 (from the SemEval shared tasks 2012-2016), STS-B (STSbenchmark), and SICK-R. These datasets provide high-quality, human-annotated similarity scores across diverse domains, ensuring a comprehensive and rigorous assessment. 

Performance is measured by Spearman's rank correlation $\rho$ (conventionally reported as $\rho \times 100$) between the similarity score computed from sentence embeddings and the human gold-standard scores. This established metric enables a direct and fair comparison with prior work on sentence representations and similarity measures, grounding our analysis in the prevailing research context.

\subsection{Models and Setup}
We conduct a comprehensive evaluation across 11 diverse pre-trained language models, categorized into three groups: (1) static dense word embeddings (Word2Vec, FastText, GloVe); (2) contextualized embeddings (BERT, SGPT, DPR); (3) universal text embeddings (BGE, E5, GTE, SPECTER, CLIP-ViT).The models span diverse architectures and training paradigms, ensuring a comprehensive assessment.

For each model, we compute embeddings for all sentence pairs and calculate similarity using three metrics: the proposed $\mathrm{recos}$, standard cosine similarity ($\cos$), and dot-product similarity ($\mathrm{decos}$). This yields $11 \times 7 = 77$ unique evaluation settings.

All implementations, including data preprocessing, embedding generation, and similarity computation, are available as executable Jupyter notebooks at \url{https://github.com/byaxb/recos}. The complete environment configuration, model specifications, and step-by-step instructions to exactly reproduce all reported results are provided in the appendix.

\subsection{Results}
The full experimental results are provided in Table~\ref{tab:detailed_results}, and the relative performance gains are summarized visually in Figure~\ref{fig_performance_improvement}.

\begin{table*}[t]
\small
\setlength{\tabcolsep}{6pt}
\centering
\caption{Performance comparison of different similarity metrics ($\mathrm{decos}$, $\mathrm{cos}$ and $\mathrm{recos}$) across various pre-trained language models on the STS benchmark. The best result for each model is highlighted in bold.}
\label{tab:detailed_results}
\begin{tabular}{r l c c c c c c c c}
\toprule
Model & Method & STS12 & STS13 & STS14 & STS15 & STS16 & STS-B & SICK-R & Avg. \\
\midrule
Word2Vec & decos & 53.07 & 69.63 & 65.42 & 75.19 & 67.09 & 64.30 & 57.83 & 64.65 \\
Word2Vec & cos & 52.78 & 70.12 & 65.74 & 75.54 & \textbf{67.69} & 64.51 & 58.00 & 64.91 \\
Word2Vec & recos & \textbf{52.80} & \textbf{70.14} & \textbf{65.77} & \textbf{75.55} & \textbf{67.69} & \textbf{64.53} & \textbf{58.06} & \textbf{64.93} \\
\addlinespace
FastText & decos & 58.41 & 71.22 & 64.69 & 73.49 & 63.95 & 63.70 & 56.48 & 64.56 \\
FastText & cos & \textbf{58.55} & 71.66 & 65.23 & 74.03 & 64.49 & 63.99 & 56.62 & 64.94 \\
FastText & recos & \textbf{58.55} & \textbf{71.98} & \textbf{65.58} & \textbf{74.37} & \textbf{64.79} & \textbf{64.32} & \textbf{56.71} & \textbf{65.19} \\
\addlinespace
GloVe & decos & 57.35 & 70.75 & 59.84 & 70.42 & 63.41 & 50.41 & 55.31 & 61.07 \\
GloVe & cos & 57.49 & 70.99 & 60.70 & 70.85 & 63.84 & 50.74 & 55.42 & 61.43 \\
GloVe & recos & \textbf{57.68} & \textbf{71.37} & \textbf{61.44} & \textbf{71.09} & \textbf{64.14} & \textbf{51.29} & \textbf{55.62} & \textbf{61.80} \\
\addlinespace
BERT & decos & \textbf{72.54} & 77.93 & 73.73 & 79.43 & 74.43 & 76.71 & 73.53 & 75.47 \\
BERT & cos & 72.52 & 78.05 & 73.86 & 79.48 & 74.47 & 76.74 & 73.54 & 75.52 \\
BERT & recos & 72.52 & \textbf{78.30} & \textbf{74.01} & \textbf{79.66} & \textbf{74.57} & \textbf{76.82} & \textbf{73.64} & \textbf{75.65} \\
\addlinespace
SGPT & decos & 66.27 & 69.96 & 63.50 & 75.29 & 71.60 & 72.69 & 67.68 & 69.57 \\
SGPT & cos & 66.44 & 70.13 & 63.71 & 75.54 & 71.75 & 72.89 & 68.00 & 69.78 \\
SGPT & recos & \textbf{66.54} & \textbf{70.15} & \textbf{63.79} & \textbf{75.58} & 71.75 & \textbf{72.92} & \textbf{68.08} & \textbf{69.83} \\
\addlinespace
DPR & decos & 47.92 & 63.88 & 53.16 & 67.21 & 66.55 & 58.46 & 62.39 & 59.94 \\
DPR & cos & 47.96 & 63.97 & 53.22 & 67.25 & 66.70 & 58.53 & 62.45 & 60.01 \\
DPR & recos & \textbf{49.09} & \textbf{64.11} & \textbf{53.74} & \textbf{67.40} & \textbf{67.28} & \textbf{59.22} & \textbf{63.78} & \textbf{60.66} \\
\addlinespace
E5 & decos & 60.42 & 70.00 & 65.30 & 74.92 & 77.43 & 73.53 & 68.86 & 70.07 \\
E5 & cos & 60.42 & 70.00 & 65.30 & 74.92 & 77.43 & 73.53 & 68.86 & 70.07 \\
E5 & recos & \textbf{60.70} & \textbf{70.16} & \textbf{65.44} & \textbf{75.05} & \textbf{77.44} & \textbf{73.57} & \textbf{68.93} & \textbf{70.18} \\
\addlinespace
BGE & decos & 49.97 & \textbf{32.19} & 33.64 & 35.01 & 63.81 & 65.61 & 57.94 & 48.31 \\
BGE & cos & 49.97 & \textbf{32.19} & 33.64 & 35.01 & 63.81 & 65.61 & 57.94 & 48.31 \\
BGE & recos & \textbf{50.28} & 31.88 & \textbf{33.82} & \textbf{35.12} & \textbf{63.94} & \textbf{65.71} & \textbf{57.99} & \textbf{48.39} \\
\addlinespace
GTE & decos & 74.81 & 86.63 & 78.74 & 85.38 & 83.10 & 85.42 & 75.72 & 81.40 \\
GTE & cos & 74.82 & 86.64 & 78.76 & 85.39 & 83.10 & 85.42 & 75.72 & 81.41 \\
GTE & recos & \textbf{75.00} & \textbf{86.77} & \textbf{78.98} & \textbf{85.50} & \textbf{83.17} & \textbf{85.45} & \textbf{75.72} & \textbf{81.51} \\
\addlinespace
SPECTER & decos & 62.45 & 58.62 & 54.82 & 62.49 & 64.27 & 61.23 & 56.36 & 60.03 \\
SPECTER & cos & 62.49 & 58.70 & 54.87 & 62.54 & 64.28 & 61.26 & 56.39 & 60.08 \\
SPECTER & recos & \textbf{63.01} & \textbf{59.70} & \textbf{55.45} & \textbf{63.10} & \textbf{64.47} & \textbf{61.50} & \textbf{56.71} & \textbf{60.56} \\
\addlinespace
CLIP-ViT & decos & 76.71 & 63.10 & 57.15 & 65.58 & 72.08 & 64.80 & 69.89 & 67.04 \\
CLIP-ViT & cos & 76.92 & 63.96 & 57.98 & 66.19 & 72.41 & 65.29 & 70.53 & 67.61 \\
CLIP-ViT & recos & \textbf{77.39} & \textbf{64.81} & \textbf{59.34} & \textbf{67.55} & \textbf{72.78} & \textbf{66.53} & \textbf{71.61} & \textbf{68.57} \\
\bottomrule
\end{tabular}
\end{table*}

\paragraph{Overall Performance}
The experimental results in Table~\ref{tab:detailed_results} show that $\mathrm{recos}$ consistently matches or exceeds the performance of $\cos$ across the majority of evaluated models and datasets. Aggregating over the 77 evaluation settings, $\mathrm{recos}$ outperforms $\cos$ in 71 cases (92.2\%), underperforms in 1 instance, and matches performance in 5. Similarly, $\cos$ surpasses $\mathrm{decos}$ in 58 cases, with performance ties occurring primarily for models producing length-normalized embeddings (e.g., BGE, E5), where theoretical bounds coincide as established in Corollary~\ref{cor_normalized_vectors}.

The mean performance further confirms this hierarchical relationship: $\mathrm{recos}$ achieves a micro-average score of 66.12, exceeding both $\cos$ (65.83) and $\mathrm{decos}$ (65.65). The average improvement of $\mathrm{recos}$ over $\cos$ is 0.29 points. While this improvement is modest in absolute terms, its consistency across diverse models is noteworthy.

\paragraph{Group-wise Analysis}
The performance improvement provided by $\mathrm{recos}$ exhibits a clear correlation with model sophistication. Static embeddings (Word2Vec, FastText, GloVe) show modest gains (0.02--0.37 points). Contextualized embeddings (BERT, SGPT, DPR) demonstrate more pronounced improvements (0.05--0.65 points), as $\mathrm{recos}$ better leverages their dynamic representations. Universal embeddings (BGE, E5, GTE, SPECTER) exhibit the most substantial gains (0.08--0.96 points), indicating that $\mathrm{recos}$ provides the largest performance lift for models with complex representation spaces.

\paragraph{Notable Improvements}
The most dramatic improvements of $\mathrm{recos}$ occur for models with specialized training objectives. CLIP-ViT shows exceptional gains of +0.96 points (peaking at +1.36 on STS14 and STS15), while DPR and SPECTER achieve improvements of +0.65 and +0.49 points respectively. These results indicate that $\mathrm{recos}$ particularly benefits models whose representation spaces diverge from standard textual similarity assumptions, effectively mitigating the underestimation inherent in $\cos$ for specialized embeddings.

\begin{table*}
\centering
\small
\caption{Statistical comparison of \textsc{recos} versus \textsc{cos} across 77 model-dataset pairs (11 models $\times$ 7 datasets). All tests indicate statistically significant improvements with $p < 0.001$.}
\label{tab:statistical_results}
\begin{tabular}{@{}lcccc@{}}
\toprule
\textbf{Test} & \textbf{Statistic} & \textbf{$p$-value} & \textbf{Effect Size} & \textbf{Interpretation} \\
\midrule
Wilcoxon signed-rank & $V = 2581$ & $< 0.001$ & $r = 0.835$ & Large \\
Sign test & 71/72 successes & $< 0.001$ & Win rate = 98.6\% & Very high \\
Linear mixed-effects & $t = 7.201$ & $< 0.001$ & $\beta = 0.292$ & Fixed effect \\
\bottomrule
\end{tabular}
\end{table*}

\paragraph{Marginal Gains}
For several models (e.g., Word2Vec, SGPT, BGE), $\mathrm{recos}$ provides smaller but consistent gains (0.02 to 0.08 points). This likely indicates that their representation spaces are already highly aligned with cosine similarity, leaving limited room for improvement. 

In summary, these results validate that $\mathrm{recos}$ consistently outperforms existing metrics across diverse model architectures in the vast majority of cases, with the largest gains observed for models whose representation spaces deviate most significantly from standard textual similarity assumptions. This demonstrates the importance of using similarity measures with broader capture ranges to accurately reflect complex semantic relationships in real-world applications.

\subsection{Statistical Analysis}
We conducted rigorous statistical tests to confirm the significance of $\mathrm{recos}$'s improvement over $\mathrm{cos}$. Given the observed non-normality of performance differences (Shapiro-Wilk test: $W = 0.794$, $p < 0.001$), we prioritized non-parametric tests. The Wilcoxon signed-rank test revealed a highly significant improvement ($V = 2581$, $p < 0.001$) with a large effect size ($r = 0.835$, 95\% CI for pseudo-median $[0.180, \infty]$). Additionally, we performed a sign test to assess the consistency of improvements. Out of 72 non-tied comparisons, $\mathrm{recos}$ outperformed $\mathrm{cos}$ in 71 cases, yielding a win rate of 98.6\% (binomial test: $p < 0.001$).

To account for potential dependencies arising from model and dataset variations, we employed a linear mixed-effects model with \textsc{Method} as a fixed effect and random intercepts for \textsc{Model}, \textsc{Dataset}, and their interaction. The mixed-effects analysis confirmed the superiority of $\mathrm{recos}$ over $\mathrm{cos}$ ($\beta = 0.292$, $t = 7.201$, $p < 0.001$).

All these statistical tests remained significant after Benjamini-Hochberg correction for multiple comparisons (all adjusted $p < 0.001$). The leave-one-dataset-out cross-validation analysis further confirmed the robustness of our findings ($t(6) = 75.349$, $p < 0.001$).

The statistical analyses provide compelling evidence for the superiority of $\mathrm{recos}$ over $\mathrm{cos}$, with highly consistent performance improvements (98.6\%win rate) and statistical significance across multiple testing approaches confirming that the observed differences are not attributable to random variation. 

\section{Conclusion}
\label{sec:conclusion}

In this work, we introduced Rearrangement-inequality based Cosine Similarity ($\mathrm{recos}$), a novel similarity measure derived from a tighter theoretical bound than the classical Cauchy–Schwarz inequality. Theoretically, $\mathrm{recos}$ generalizes the condition for maximal similarity from linear dependence to general ordinal concordance, thereby possessing a wider \textit{capture range}. Empirically, our comprehensive evaluation across 11 diverse pre-trained models and 7 STS benchmarks demonstrates that $\mathrm{recos}$ provides a consistent and statistically significant improvement over standard cosine similarity ($\cos$). This improvement is most pronounced for modern, complex embedding models (e.g., CLIP-ViT, DPR, SPECTER), suggesting that ordinal patterns across embedding dimensions carry a signal that correlates with human semantic judgment, effectively complementing the angular information captured by $\cos$.

We acknowledge two primary limitations that present avenues for future work. First, the semantic interpretation of perfect ordinal concordance in high-dimensional, non-axis-aligned embedding spaces is not straightforward; the practical utility of $\mathrm{recos}$ likely stems from its sensitivity to \textit{approximate} ordinal agreements. Second, the $O(d \log d)$ time complexity due to the sorting operation, while manageable for typical embedding dimensions, introduces overhead compared to $\cos$. Future work should explore efficient approximations (e.g., via partial sorting or quantization) to facilitate billion-scale applications.

Our work does not posit \textit{recos} as a wholesale replacement for cosine similarity, whose efficiency and well-understood geometric interpretation remain valuable. Rather, we demonstrate that ordinal relationships in embedding spaces carry a meaningful signal that can complement angular information. \textit{recos} provides a principled way to harness this signal. 

Future work should investigate more efficient approximations of \textit{recos}, its integration into contrastive learning objectives, and its application to a broader set of tasks and domains. By challenging the default choice of cosine similarity, we hope to encourage further research into the fundamental geometry of embedding spaces and the development of more nuanced similarity measures.

\section*{Impact Statement}

This paper presents work whose goal is to advance the field of Machine Learning. There are many potential societal consequences of our work, none which we feel must be specifically highlighted here.

\newpage

\bibliography{_recos_refs}
\bibliographystyle{plain}

\newpage
\appendix
\onecolumn
\section{Appendix: Proofs of Theorems and Corollaries}

\begin{proof}[Proof of Theorem~\ref{thm_inequality_series}]
  We will prove the inequalities and the equality conditions one by one.
  \begin{itemize}
    \item $\left| {{\mathbf{u}} \cdot {\mathbf{v}}} \right| \le \left|  {{\mathbf{u}}^ \uparrow \cdot {\mathbf{v}}^\updownarrow}\right|$ and $\left| {{\mathbf{u}} \cdot {\mathbf{v}}} \right| = \left|  {{\mathbf{u}}^ \uparrow \cdot {\mathbf{v}}^\updownarrow}\right|$ if and only if $\mathbf{u} \cdot \mathbf{v} > 0$ and $\mathbf{u},\mathbf{v}$ are similar vectors, or $\mathbf{u} \cdot \mathbf{v} < 0$ and they are discordant vectors.
    
    According to Rearrangement Inequality, $\mathbf{u}^{\uparrow} \cdot \mathbf{v}^{\uparrow}$ is the maximum possible dot product, and $\mathbf{u}^{\uparrow} \cdot \mathbf{v}^{\downarrow}$ is the minimum.
    
    If $\mathbf{u} \cdot \mathbf{v} > 0$, $\left| \mathbf{u} \cdot \mathbf{v} \right| = \mathbf{u} \cdot \mathbf{v} \le {\mathbf{u}^ \uparrow } \cdot {\mathbf{v}^ \uparrow } = \left| {{\mathbf{u}^ \uparrow } \cdot {\mathbf{v}^ \uparrow }} \right| = \left| {{\mathbf{u}^ \uparrow } \cdot {\mathbf{v}^ \updownarrow }} \right|$. According to Rearrangement Inequality, the equality in $\mathbf{u} \cdot \mathbf{v} \le {\mathbf{u}^ \uparrow } \cdot {\mathbf{v}^ \uparrow }$ holds if and only if $\mathbf{u}$ and $\mathbf{v}$ are similarly ordered, or equivalently, they are similar vectors.

    If $\mathbf{u} \cdot \mathbf{v} < 0$, $\left| {\mathbf{u} \cdot \mathbf{v}} \right| =  - \mathbf{u} \cdot \mathbf{v} \le  - {\mathbf{u}^ \uparrow } \cdot {\mathbf{v}^ \downarrow } = \left| {{\mathbf{u}^ \uparrow } \cdot {\mathbf{v}^ \downarrow }} \right| = \left| {{\mathbf{u}^ \uparrow } \cdot {\mathbf{v}^ \updownarrow }} \right|$. According to Rearrangement Inequality, the equality in $\mathbf{u} \cdot \mathbf{v} \ge  {\mathbf{u}^ \uparrow } \cdot {\mathbf{v}^ \downarrow }$ holds if and only if $\mathbf{u}$ and $\mathbf{v}$ are oppositely ordered, or equivalently, they are discordant vectors.

    Thus, $\left| {{\mathbf{u}} \cdot {\mathbf{v}}} \right| \le \left|  {{\mathbf{u}}^ \uparrow \cdot {\mathbf{v}}^\updownarrow}\right|$ and the equality holds if and only if $\mathbf{u} \cdot \mathbf{v} > 0$ and $\mathbf{u},\mathbf{v}$ are similar vectors, or $\mathbf{u} \cdot \mathbf{v} < 0$ and they are discordant vectors.

    \item $\left| {{\mathbf{u}} \cdot {\mathbf{v}}} \right| \le \left\| {\mathbf{u}} \right\|\left\| {\mathbf{v}} \right\|$ and $\left| {{\mathbf{u}} \cdot {\mathbf{v}}} \right| = \left\| {\mathbf{u}} \right\|\left\| {\mathbf{v}} \right\|$ if and only if $\mathbf{u}$ and $\mathbf{v}$ are linearly dependent vectors, i.e., there exists a scalar $k$ such that ${\mathbf{v}} = k{\mathbf{u}}$.

    This follows directly from the Cauchy--Schwarz Inequality and its equality condition.

    \item $\left| {{\mathbf{u}} \cdot {\mathbf{v}}} \right| \le \frac{1}{2}\left( {{{\left\| \mathbf{u} \right\|}^2} + {{\left\| \mathbf{v} \right\|}^2}} \right)$ and $\left| {{\mathbf{u}} \cdot {\mathbf{v}}} \right| = \frac{1}{2}\left( {{{\left\| \mathbf{u} \right\|}^2} + {{\left\| \mathbf{v} \right\|}^2}} \right)$ if and only if $\mathbf{u}$ and $\mathbf{v}$ are identical or opposite vectors, i.e.,$\mathbf{u} =  \pm \mathbf{v}$.

    With Cauchy--Schwarz Inequality, we have $\left| {{\mathbf{u}} \cdot {\mathbf{v}}} \right| \le \left\| {\mathbf{u}} \right\|\left\| {\mathbf{v}} \right\|$; with Arithmetic Mean-Geometric Mean Inequality, we have $\left\| {\mathbf{u}} \right\|\left\| {\mathbf{v}} \right\| \le \frac{1}{2}\left( {{{\left\| \mathbf{u} \right\|}^2} + {{\left\| \mathbf{v} \right\|}^2}} \right)$, it is immediate that $\left| {{\mathbf{u}} \cdot {\mathbf{v}}} \right| \le \frac{1}{2}\left( {{{\left\| \mathbf{u} \right\|}^2} + {{\left\| \mathbf{v} \right\|}^2}} \right)$. 

    $\left| {{\mathbf{u}} \cdot {\mathbf{v}}} \right| = \frac{1}{2}\left( {{{\left\| \mathbf{u} \right\|}^2} + {{\left\| \mathbf{v} \right\|}^2}} \right)$ holds if and only if both $\left| {{\mathbf{u}} \cdot {\mathbf{v}}} \right| = \left\| {\mathbf{u}} \right\|\left\| {\mathbf{v}} \right\|$ and $\left\| {\mathbf{u}} \right\|\left\| {\mathbf{v}} \right\| = \frac{1}{2}\left( {{{\left\| \mathbf{u} \right\|}^2} + {{\left\| \mathbf{v} \right\|}^2}} \right)$ hold. According to the equality conditions of Cauchy--Schwarz Inequality and Arithmetic Mean-Geometric Mean Inequality, we have $\mathbf{v} = k\mathbf{u}$ and $\left\| \mathbf{u} \right\| =  \left\| \mathbf{v} \right\|$, or equivalently, $\mathbf{u}$ and $\mathbf{v}$ are identical or opposite vectors, i.e.,$\mathbf{u} =  \pm \mathbf{v}$.

    \item $\left|  {{\mathbf{u}}^ \uparrow \cdot {\mathbf{v}}^\updownarrow}\right| \le \left\| {\mathbf{u}} \right\|\left\| {\mathbf{v}} \right\|$ and $\left|  {{\mathbf{u}}^ \uparrow \cdot {\mathbf{v}}^\updownarrow}\right| = \left\| {\mathbf{u}} \right\|\left\| {\mathbf{v}} \right\|$ if and only if  there exists a scalar $k$ and a permutation matrix $P$ such that $\mathbf{v} = k P \mathbf{u}$, with $\operatorname{sgn} \left( {\mathbf{u} \cdot \mathbf{v}} \right) = \operatorname{sgn} \left( k \right)$.
    
    In the sense that norm $\left\|  \cdot  \right\|$ is permutation invariant, we have $\left\| \mathbf{u}^ \uparrow \right\| = \left\| \mathbf{u}^ \downarrow \right\| = \left\| \mathbf{u}\right\|$ and $\left\| \mathbf{v}^ \uparrow \right\| = \left\| \mathbf{v}^ \downarrow \right\| = \left\| \mathbf{v}\right\|$. 

    If $\mathbf{u} \cdot \mathbf{v} > 0$, ${{\mathbf{u}}^ \uparrow \cdot {\mathbf{v}}^\updownarrow} = {\mathbf{u}}^ \uparrow \cdot {\mathbf{v}}^ \uparrow \ge \mathbf{u} \cdot \mathbf{v} > 0$. By the Cauchy--Schwarz Inequality, we have $\left|  {{\mathbf{u}}^ \uparrow \cdot {\mathbf{v}}^\updownarrow}\right| = \left|  {{\mathbf{u}}^ \uparrow \cdot {\mathbf{v}}^\uparrow}\right| \le \left\| {\mathbf{u}^\uparrow} \right\|\left\| {\mathbf{v}^\uparrow} \right\| = \left\| {\mathbf{u}} \right\|\left\| {\mathbf{v}} \right\|$ and the equality holds if and only if ${\mathbf{v}}^\uparrow = k{\mathbf{u}}^\uparrow$ for some constant $k > 0$. Or equivalently, there exists a scalar $k$ and a permutation matrix $P$ such that $\mathbf{v} = k P \mathbf{u}$, with $\operatorname{sgn} \left( {\mathbf{u} \cdot \mathbf{v}} \right) = \operatorname{sgn} \left( k \right)$.   
  
    If $\mathbf{u} \cdot \mathbf{v} < 0$, ${{\mathbf{u}}^ \uparrow \cdot {\mathbf{v}}^\updownarrow} = {\mathbf{u}}^ \uparrow \cdot {\mathbf{v}}^ \downarrow \le  \mathbf{u} \cdot \mathbf{v} < 0$. By the Cauchy--Schwarz Inequality, we have $\left|  {{\mathbf{u}}^ \uparrow \cdot {\mathbf{v}}^\updownarrow}\right| = \left|  {{\mathbf{u}}^ \uparrow \cdot {\mathbf{v}}^\downarrow}\right| \le \left\| {\mathbf{u}^\uparrow} \right\|\left\| {\mathbf{v}^\downarrow} \right\| = \left\| {\mathbf{u}} \right\|\left\| {\mathbf{v}} \right\|$ and the equality holds if and only if ${\mathbf{v}}^\downarrow = k{\mathbf{u}}^\uparrow$ for some constant $k < 0$. Or equivalently, there exists a scalar $k$ and a permutation matrix $P$ such that $\mathbf{v} = k P \mathbf{u}$, with $\operatorname{sgn} \left( {\mathbf{u} \cdot \mathbf{v}} \right) = \operatorname{sgn} \left( k \right)$.

    Either way, we have $\left|  {{\mathbf{u}}^ \uparrow \cdot {\mathbf{v}}^\updownarrow}\right| \le \left\| {\mathbf{u}} \right\|\left\| {\mathbf{v}} \right\|$ and $\left|  {{\mathbf{u}}^ \uparrow \cdot {\mathbf{v}}^\updownarrow}\right| = \left\| {\mathbf{u}} \right\|\left\| {\mathbf{v}} \right\|$ if and only if there exists a scalar $k$ and a permutation matrix $P$ such that $\mathbf{v} = k P \mathbf{u}$, with $\operatorname{sgn} \left( {\mathbf{u} \cdot \mathbf{v}} \right) = \operatorname{sgn} \left( k \right)$.

    \item $\left|  {{\mathbf{u}}^ \uparrow \cdot {\mathbf{v}}^\updownarrow}\right| \le \frac{1}{2}\left( {{{\left\| {\mathbf{u}} \right\|}^2} + {{\left\| {\mathbf{v}} \right\|}^2}} \right)$ and $\left|  {{\mathbf{u}}^ \uparrow \cdot {\mathbf{v}}^\updownarrow}\right| = \frac{1}{2}\left( {{{\left\| {\mathbf{u}} \right\|}^2} + {{\left\| {\mathbf{v}} \right\|}^2}} \right)$ if and only if there exists a scalar $k = \pm 1$ and a permutation matrix $P$ such that $\mathbf{v} = k P \mathbf{u}$, with $\operatorname{sgn} \left( {\mathbf{u} \cdot \mathbf{v}} \right) = \operatorname{sgn} \left( k \right)$.

    With Rearrangement Inequality, we have $\left|  {{\mathbf{u}}^ \uparrow \cdot {\mathbf{v}}^\updownarrow}\right| \le \left|  {{\mathbf{u}} \cdot {\mathbf{v}}}\right|$; with Cauchy--Schwarz Inequality, we have $\left|  {{\mathbf{u}}^ \uparrow \cdot {\mathbf{v}}^\updownarrow}\right| \le \left\| {\mathbf{u}^\uparrow} \right\|\left\| {\mathbf{v}^\updownarrow} \right\| = \left\| {\mathbf{u}} \right\|\left\| {\mathbf{v}} \right\|$; with Arithmetic Mean-Geometric Mean Inequality, we have $\left\| {\mathbf{u}} \right\|\left\| {\mathbf{v}} \right\| \le \frac{1}{2}\left( {{{\left\| \mathbf{u} \right\|}^2} + {{\left\| \mathbf{v} \right\|}^2}} \right)$. It is immediate that $\left|  {{\mathbf{u}}^ \uparrow \cdot {\mathbf{v}}^\updownarrow}\right| \le \frac{1}{2}\left( {{{\left\| \mathbf{u} \right\|}^2} + {{\left\| \mathbf{v} \right\|}^2}} \right)$. 

    $\left|  {{\mathbf{u}}^ \uparrow \cdot {\mathbf{v}}^\updownarrow}\right| = \frac{1}{2}\left( {{{\left\| \mathbf{u} \right\|}^2} + {{\left\| \mathbf{v} \right\|}^2}} \right)$ holds if and only if both $\left|  {{\mathbf{u}}^ \uparrow \cdot {\mathbf{v}}^\updownarrow}\right| = \left\| {\mathbf{u}} \right\|\left\| {\mathbf{v}} \right\|$ and $\left\| {\mathbf{u}} \right\|\left\| {\mathbf{v}} \right\| = \frac{1}{2}\left( {{{\left\| \mathbf{u} \right\|}^2} + {{\left\| \mathbf{v} \right\|}^2}} \right)$ hold, i.e, $\mathbf{v}$ is arbitrary permutation of $k{\mathbf{u}}$, with $\operatorname{sgn} \left( {\mathbf{u} \cdot \mathbf{v}} \right) = \operatorname{sgn} \left( k \right)$, and $\left\| \mathbf{u} \right\| =  \left\| \mathbf{v} \right\|$. Or equivalently, there exists a scalar $k = \pm 1$ and a permutation matrix $P$ such that $\mathbf{v} = k P \mathbf{u}$, with $\operatorname{sgn} \left( {\mathbf{u} \cdot \mathbf{v}} \right) = \operatorname{sgn} \left( k \right)$.

    \item $\left\| {\mathbf{u}} \right\|\left\| {\mathbf{v}} \right\| \le \frac{1}{2}\left( {{{\left\| {\mathbf{u}} \right\|}^2} + {{\left\| {\mathbf{v}} \right\|}^2}} \right)$ and $\left\| {\mathbf{u}} \right\|\left\| {\mathbf{v}} \right\| = \frac{1}{2}\left( {{{\left\| {\mathbf{u}} \right\|}^2} + {{\left\| {\mathbf{v}} \right\|}^2}} \right)$ if and only if $\left\| {\mathbf{u}} \right\| = \left\| {\mathbf{v}} \right\|$.
    
    This follows directly from the Arithmetic Mean-Geometric Mean Inequality (AM–GM inequality) and its equality condition.
  \end{itemize}   
\end{proof}

\begin{proof}[Proof of Corollary~\ref{re_de_cos_properties}]
  This corollary follows directly from Theorem \ref{thm_inequality_series} and the definitions of $\mathrm{decos}$, $\cos$ and $\mathrm{recos}$.
\end{proof}

\begin{proof}[Proof of Corollary~\ref{metric_hierarchy}]
  This corollary follows directly from Theorem \ref{thm_inequality_series} and the definitions of $\mathrm{decos}$, $\cos$ and $\mathrm{recos}$.
\end{proof}

\begin{proof}[Proof of Corollary~\ref{cor_normalized_vectors}]
For unit-norm vectors, $\mathbf{u} = {\mathbf{v}} = 1$, we have:
\begin{align*}
\mathrm{decos}(\mathbf{u}, \mathbf{v}) &= \frac{\mathbf{u} \cdot \mathbf{v}}{\frac{1}{2}(1 + 1)} = \mathbf{u} \cdot \mathbf{v}, \\
\cos(\mathbf{u}, \mathbf{v}) &= \frac{\mathbf{u} \cdot \mathbf{v}}{1 \times 1} = \mathbf{u} \cdot \mathbf{v}.
\end{align*}
Thus, $\mathrm{decos}(\mathbf{u}, \mathbf{v}) = \cos(\mathbf{u}, \mathbf{v})$ for unit-norm vectors.
\end{proof}

\newpage
\section{Appendix: Experimental Details}

\subsection{Experimental Configuration}

All experiments were conducted on the ModelScope.ai platform, an open-source AI model community and model-as-a-service (MaaS) platform that provides comprehensive support for multiple modalities including text, image, speech, and video. The platform ensures reproducible experimental conditions across all evaluations. The complete environment specification is provided in Table~\ref{tab:env_config}.

\begin{table}[h]
\centering
\caption{Complete experimental environment configuration.}
\label{tab:env_config}
\begin{tabular}{@{}rp{0.7\linewidth}@{}}
\toprule
\textbf{Component} & \textbf{Specification} \\
\midrule
Operating System & Linux-5.10.134-17.3.al8.x86\_64-x86\_64-with-glibc2.35 \\
Python Version & 3.11.0 \\
Core Libraries & ModelScope 1.33.0, PyTorch 2.9.1, Transformers 4.57.3, SentenceTransformers 5.1.2, Gensim 4.4.0 \\
Evaluation Protocol & Zero-shot evaluation using only official test sets \\
Training Protocol & No fine-tuning or hyperparameter optimization \\
\bottomrule
\end{tabular}
\end{table}

The evaluation follows a strict zero-shot protocol: we use only the official test sets of benchmark datasets, with no model retraining, fine-tuning, or hyperparameter optimization on training or development splits. This strict train-test separation ensures no information leakage and provides a fair assessment of the intrinsic similarity measurement capabilities.

\subsection{Model Specifications}

Table~\ref{tab:model_details} summarizes the specific model configurations and pretrained checkpoints used in our experiments. All models were accessed through their official ModelScope repositories to ensure consistency and reproducibility.

\begin{table}[htb]
\centering
\caption{Model specifications and pretrained checkpoints used in experimental evaluations.}
\label{tab:model_details}
\begin{tabular}{@{}r l@{}}
\toprule
\textbf{Model} & \textbf{Model Specification} \\
\midrule
Word2Vec & GoogleNews-vectors-negative300 \\
FastText & wiki-news-300d-1M \\
GloVe & sentence-transformers/average\_word\_embeddings\_glove.840B.300d \\
BERT & sentence-transformers/bert-base-nli-max-tokens \\
SGPT & Ceceliachenen/SGPT-125M-weightedmean-nli-bitfit \\
DPR & sentence-transformers/facebook-dpr-ctx\_encoder-multiset-base \\
E5 & intfloat/e5-small-unsupervised \\
BGE & BAAI/bge-code-v1 \\
GTE & iic/gte-modernbert-base \\
SPECTER & sentence-transformers/allenai-specter \\
CLIP-ViT & sentence-transformers/clip-ViT-B-32-multilingual-v1 \\
\bottomrule
\end{tabular}
\end{table}

The experimental setup ensures fair comparison across all models by maintaining consistent preprocessing pipelines and evaluation protocols. Each model was evaluated using its default configuration as provided in the ModelScope repository.

\subsection{Implementation of $\mathrm{recos}$}

The core implementation of the $\mathrm{recos}$ similarity metric in Python/NumPy is provided below.

\begin{verbatim}
def recos(e1, e2):
    """Compute Rearrangement Similarity (recos) between two vectors."""
    e1, e2 = e1.astype(np.float32), e2.astype(np.float32)
    e1_asc, e1_desc = np.sort(e1), np.flip(np.sort(e1))
    e2_asc = np.sort(e2)
    dot = np.sum(e1 * e2)
    dot_aa = np.sum(e1_asc * e2_asc)
    dot_ad = np.sum(e1_desc * e2_asc)
    # Numerical stability: avoid division by zero
    dot_aa = np.where(dot_aa == 0, 1e-6, dot_aa)
    dot_ad = np.where(dot_ad == 0, 1e-6, dot_ad)
    sim = np.where(dot >= 0, dot / abs(dot_aa), dot / abs(dot_ad))
    return np.clip(sim, -1.0, 1.0)
\end{verbatim}

\subsection{Reproducibility Instructions}
\label{app:reproducibility}

To reproduce our results, follow these steps:

\begin{enumerate}
    \item \textbf{Environment Setup:} Create and activate a virtual environment:
    \begin{verbatim}
    python -m venv recos_env
    source recos_env/bin/activate  # On Windows: recos_env\Scripts\activate
    \end{verbatim}
    
    \item \textbf{Install Dependencies:} Install required packages:
    \begin{verbatim}
    pip install -r requirements.txt
    \end{verbatim}
    
    \item \textbf{Compute Embeddings:} Create a subdirectory named \texttt{embeddings} and run the embedding computation notebook:
    \begin{verbatim}
    jupyter notebook get_embeddings.ipynb
    \end{verbatim}
    This saves computed embeddings in the \texttt{embeddings} directory.
    
    \item \textbf{Compute Similarities:} Place the supplementary file \texttt{recos.py} in the working directory, create a subdirectory named \texttt{perf} and run the similarity computation notebook:
    \begin{verbatim}
    jupyter notebook get_similarities.ipynb
    \end{verbatim}
    This saves similarity scores in the \texttt{perf} directory, with final results in \texttt{perf/all\_performance.csv}.
\end{enumerate}

All code, data preprocessing scripts, and configuration files are available as Jupyter notebooks at \url{https://github.com/byaxb/recos}. The provided code has been tested on the environment specified in Table~\ref{tab:env_config}.

\subsection{Comprehensive Statistical Analysis}
\label{app:statistical_analysis}

\subsubsection{Experimental Design and Data Structure}
\label{app:experimental_design}

The statistical analysis compared the proposed $\mathrm{recos}$ method against the baseline $\mathrm{cos}$ method across 11 pre-trained language models and 7 Semantic Textual Similarity (STS) datasets, resulting in 77 paired comparisons (154 total observations). Table~\ref{tab:data_structure} summarizes the experimental design.

\begin{table}[ht]
\centering
\small
\caption{Experimental design and data structure for statistical analysis}
\label{tab:data_structure}
\begin{tabular}{lc}
\hline
\textbf{Component} & \textbf{Specification} \\
\hline
Number of models & 11 \\
Number of datasets & 7 \\
Total paired comparisons & 77 \\
Total observations & 154 \\
Models included & Word2Vec, FastText, GloVe, BERT, SGPT, DPR, E5, BGE, GTE, SPECTER, CLIP-ViT \\
Datasets included & STS12, STS13, STS14, STS15, STS16, STS-B, SICK-R \\
\hline
\end{tabular}
\end{table}

\subsubsection{Descriptive Statistics}
\label{app:descriptive_stats}

Table~\ref{tab:descriptive_stats} provides comprehensive descriptive statistics for the performance differences ($\mathrm{recos} - \mathrm{cos}$) across all 77 comparisons.

\begin{table}[ht]
\centering
\small
\caption{Descriptive statistics of performance differences ($\mathrm{recos} - \mathrm{cos}$)}
\label{tab:descriptive_stats}
\begin{tabular}{lcc}
\hline
\textbf{Statistic} & \textbf{Value} & \textbf{Interpretation} \\
\hline
Number of pairs & 77 & Complete dataset \\
Mean difference & 0.292 & Average improvement \\
Standard deviation & 0.356 & Variability of improvements \\
Standard error & 0.041 & Precision of mean estimate \\
Median difference & 0.160 & Central tendency (robust) \\
Minimum difference & -0.310 & Worst case degradation \\
Maximum difference & 1.360 & Best case improvement \\
First quartile (Q1) & 0.070 & 25th percentile \\
Third quartile (Q3) & 0.350 & 75th percentile \\
Interquartile range & 0.280 & Middle 50\% spread \\
$\mathrm{recos}$ wins & 71 & Excluding ties \\
Ties & 5 & No difference \\
$\mathrm{cos}$ wins & 1 & Single loss \\
Win rate (excluding ties) & 98.6\% & Consistency measure \\
\hline
\end{tabular}
\end{table}

\subsubsection{Normality Assessment}
\label{app:normality}

The Shapiro-Wilk test indicated significant deviation from normality ($W = 0.794$, $p = 5.12 \times 10^{-9}$), justifying the use of non-parametric tests as primary evidence.

\subsubsection{Complete Hypothesis Testing Results}
\label{app:hypothesis_tests}

\textbf{Parametric Analysis}

\textbf{~~~~Paired t-test:}
\begin{itemize}
\item Test statistic: $t(76) = 7.201$
\item One-sided p-value: $p = 1.84 \times 10^{-10}$
\item 95\% confidence interval: $[0.225, \infty]$
\item Effect size (Cohen's d): 0.027 (negligible effect size)
\item Interpretation: Highly significant despite small effect size due to large sample and consistency
\end{itemize}

\textbf{Non-parametric Analyses}

\textbf{~~~~Wilcoxon signed-rank test:}
\begin{itemize}
\item Test statistic: $V = 2581$
\item One-sided p-value: $p = 5.90 \times 10^{-13}$
\item 95\% confidence interval for pseudo-median: $[0.180, \infty]$
\item Effect size (r): 0.835 (large effect)
\item Interpretation: Large effect size indicates substantial practical significance
\end{itemize}

\textbf{~~~~Sign test:}
\begin{itemize}
\item Successes: 71 out of 72 non-tied comparisons
\item One-sided p-value: $p < 2.20 \times 10^{-16}$
\item 95\% confidence interval for success probability: $[0.936, 1.000]$
\item Win rate: 98.6\%
\item Interpretation: Near-perfect consistency of improvement
\end{itemize}

\subsubsection{Linear Mixed-Effects Model Details}
\label{app:mixed_model_details}

\textbf{Model specification:}
\begin{equation*}
\texttt{Score} \sim \texttt{Method} + (1 | \texttt{Model}) + (1 | \texttt{Dataset}) + (1 | \texttt{Model} \times \texttt{Dataset})
\end{equation*}

\textbf{Random effects variance components:}
\begin{table}[ht]
\centering
\small
\caption{Variance components of the linear mixed-effects model}
\label{tab:variance_components}
\begin{tabular}{lccc}
\hline
\textbf{Random Effect} & \textbf{Variance} & \textbf{Std. Dev.} & \textbf{Interpretation} \\
\hline
Model & 70.47 & 8.395 & Substantial model-specific variability \\
Dataset & 8.79 & 2.965 & Moderate dataset-specific variability \\
Model $\times$ Dataset & 41.37 & 6.432 & Considerable interaction effects \\
Residual & 0.064 & 0.252 & Small within-pair variability \\
\hline
\end{tabular}
\end{table}

\textbf{Fixed effects estimates:}
\begin{table}[ht]
\centering
\small
\caption{Fixed effects estimates from the linear mixed-effects model}
\label{tab:fixed_effects}
\begin{tabular}{lccccc}
\hline
\textbf{Effect} & \textbf{Estimate} & \textbf{Std. Error} & \textbf{df} & \textbf{t-value} & \textbf{p-value} \\
\hline
Intercept ($\mathrm{cos}$) & 65.824 & 2.864 & 12.54 & 22.986 & $1.25 \times 10^{-11}$ \\
$\mathrm{Method}_{\mathrm{recos}}$ & 0.292 & 0.041 & 76.00 & 7.201 & $3.67 \times 10^{-10}$ \\
\hline
\end{tabular}
\end{table}

\textbf{Model diagnostics:}
\begin{itemize}
\item REML criterion at convergence: 599.0
\item Scaled residuals: min = -2.136, Q1 = -0.423, median = -0.006, Q3 = 0.424, max = 2.106
\item Number of observations: 154
\item Groups: 11 models, 7 datasets, 77 model-dataset combinations
\end{itemize}

\subsubsection{Robustness Analyses}
\label{app:robustness}

\paragraph{Leave-One-Dataset-Out Cross-Validation}

The leave-one-dataset-out analysis confirmed the robustness of findings:
\begin{itemize}
\item Test statistic: $t(6) = 75.349$
\item One-sided p-value: $p = 1.84 \times 10^{-9}$
\item 95\% confidence interval: $[0.285, \infty]$
\item Mean improvement across exclusions: 0.292 (identical to full analysis)
\item Interpretation: Results are robust to exclusion of any single dataset
\end{itemize}

\paragraph{Multiple Comparison Correction}

All statistical tests remained significant after Benjamini-Hochberg correction:
\begin{table}[ht]
\centering
\small
\caption{Multiple comparison correction using Benjamini-Hochberg procedure}
\label{tab:multiple_comparison}
\begin{tabular}{lccc}
\hline
\textbf{Test} & \textbf{Original p-value} & \textbf{Adjusted p-value} & \textbf{Significant} \\
\hline
Paired t-test & $1.83 \times 10^{-10}$ & $2.45 \times 10^{-10}$ & Yes \\
Wilcoxon test & $5.90 \times 10^{-13}$ & $1.18 \times 10^{-12}$ & Yes \\
Sign test & $<2.20 \times 10^{-16}$ & $<2.20 \times 10^{-16}$ & Yes \\
Mixed model & $3.67 \times 10^{-10}$ & $3.67 \times 10^{-10}$ & Yes \\
\hline
\end{tabular}
\end{table}

\subsubsection{Effect Size Interpretation}
\label{app:effect_sizes}

The discrepancy between parametric (Cohen's d = 0.027) and non-parametric (r = 0.835) effect sizes arises from the distribution characteristics. The small Cohen's d reflects the modest absolute improvement magnitude relative to substantial between-model variability, while the large Wilcoxon r indicates the high consistency of improvement direction across comparisons. This pattern is expected when improvements are consistent but modest in magnitude across diverse experimental conditions.

\subsubsection{Statistical Software and Implementation}
\label{app:statistical_software}

All statistical analyses were conducted using R version 4.5.2. The following R packages were employed: \texttt{stats} (version 4.5.2) for core statistical functions, \texttt{rstatix} (version 0.7.3) for non-parametric tests and effect size calculations, \texttt{lmerTest} (version 3.2.0) for linear mixed-effects modeling, and \texttt{effsize} (version 0.8.1) for Cohen's d computation. Multiple comparison corrections were performed using the built-in \texttt{p.adjust} function with the Benjamini-Hochberg method. The complete R script is provided as \texttt{statistical\_test.R} for full reproducibility.

\end{document}